\documentclass[a4paper,12pt]{article}

\usepackage[marginparwidth=2cm]{geometry}
\usepackage[utf8]{inputenc}
\usepackage[unicode=true,hypertexnames=false]{hyperref}
\usepackage{amsmath,amssymb,amsthm,amsfonts}
\usepackage{cite}
\usepackage[table]{xcolor}
\usepackage{tikz}
\usetikzlibrary{arrows.meta,calc,positioning}
\usetikzlibrary{external}
\tikzexternalize[prefix=tikzcache/]

\usepackage{marginnote}

\usepackage{algorithm}
\usepackage{algorithmicx}
\usepackage{algpseudocode}

\usepackage{array,hhline} % for drawing the virtual coordinate system as a table

\makeatletter
\let\pgfmathModX=\pgfmathMod@
\usepackage{pgfplots}%
\let\pgfmathMod@=\pgfmathModX
\makeatother
\pgfplotsset{compat=1.14}

\hyphenation{leading-ones}

\newcommand{\theop}{\mathcal{X}}

\newtheorem{theorem}{Theorem}
\newtheorem{lemma}{Lemma}

\newtheorem{definition}{Definition}

\pgfplotstableread{
  x y dev
    100 436.94 109.40138495605541
    200 1063.25 234.42098774416402
    300 1660.59 358.5757768976842
    400 2304.34 530.8648788495615
    500 3034.71 561.1439487441
    600 3890.26 872.9204277202551
    700 4494.0 831.5155811024599
    800 5356.03 1017.4169953982743
    900 5855.16 1105.1481798355996
    1000 6806.78 1204.311426383214
    1100 7699.28 1447.3986632939286
    1200 8246.03 1473.4365482494222
    1300 8866.29 1524.24725780606
    1400 9871.22 1688.5451067039155
    1500 10898.8 1871.2692985988917
    1600 11514.46 2185.058561589542
    1700 12072.24 1814.3643855731857
    1800 13051.53 2400.4924303529515
    1900 14060.43 2549.9852495315117
    2000 14993.51 2488.696845700171
}{\ArityOne}
\pgfplotstableread{
  x y dev
    100 197.93 13.433152867424534
    200 395.09 19.496746685360897
    300 598.18 24.22553123537983
    400 792.51 26.113000361482555
    500 1001.75 30.955939688857807
    600 1204.97 30.8268068440768
    700 1401.62 38.43840146351961
    800 1600.18 39.61771361806779
    900 1794.02 47.407108532307035
    1000 1998.15 43.81766052332522
    1100 2202.78 44.63535311246845
    1200 2392.37 45.39916232039904
    1300 2602.11 52.360231458463176
    1400 2798.66 47.60956953293226
    1500 2999.42 51.64705003945315
    1600 3203.2 56.349199845537505
    1700 3392.34 54.2512337522048
    1800 3590.62 58.501796570876365
    1900 3805.69 63.78510243724071
    2000 3998.03 70.20121944896124
}{\ArityTwo}
\pgfplotstableread{
  x y dev
    100 111.58 7.326691206841192
    200 223.46 10.311727167245898
    300 336.72 13.003558176611714
    400 450.9 13.991700281512495
    500 563.2 19.053354114966297
    600 673.65 18.77908518549401
    700 789.62 21.908339816959057
    800 902.45 18.33381542065884
    900 1010.4 24.89127874529063
    1000 1124.6 27.832108483301493
    1100 1236.67 23.787381002319165
    1200 1346.96 26.516592861257593
    1300 1459.82 26.508767369961724
    1400 1568.48 30.436381738526894
    1500 1684.32 32.929640327844645
    1600 1795.96 32.05170065927104
    1700 1908.56 30.0761054861184
    1800 2025.16 26.1115400781441
    1900 2134.54 32.50560790778332
    2000 2252.25 33.666629162895404
}{\CustomArityThree}
\pgfplotstableread{
  x y dev
    100 85.5 4.945153734166962
    200 172.1 7.607147091300844
    300 258.37 9.26234366135322
    400 342.92 9.270001579985266
    500 427.5 12.036560467126307
    600 512.1 11.970080546213058
    700 597.9 13.137578148043865
    800 683.87 13.571103436524547
    900 767.3 14.014782959937033
    1000 854.46 15.893216900122633
    1100 937.88 16.008381643010726
    1200 1026.03 18.79418295992973
    1300 1111.96 19.441141422199987
    1400 1197.11 17.967363791501572
    1500 1284.14 18.84761861012492
    1600 1363.0 21.70509412813294
    1700 1456.34 20.46639028537107
    1800 1539.28 23.722758965375277
    1900 1623.08 23.825832682831923
    2000 1707.61 21.1325279456514
}{\CustomArityFour}
\pgfplotstableread{
  x y dev
    100 128.93 7.480958994323351
    200 257.83 9.972745688407809
    300 388.99 13.651872998370454
    400 515.38 16.81256144144081
    500 649.45 15.496741267085474
    600 771.89 18.26012656643492
    700 902.3 21.032202341969214
    800 1035.47 19.157747595406224
    900 1157.3 24.053181817704612
    1000 1288.46 21.479809641200777
    1100 1419.18 25.070156107865127
    1200 1549.03 27.547004823527995
    1300 1675.93 25.895772050684453
    1400 1806.79 26.459534570679235
    1500 1940.69 28.928227101437162
    1600 2063.61 30.16382708885702
    1700 2200.58 31.82755936847082
    1800 2325.44 31.693073519160436
    1900 2455.3 33.01744258950888
    2000 2585.71 35.70736703547407
}{\GenericArityThreePure}
\pgfplotstableread{
  x y dev
    100 128.03 8.0734694581022
    200 257.63 10.593170088564266
    300 385.55 13.462536870512146
    400 515.67 15.715947636111865
    500 647.14 16.189546446439262
    600 771.34 19.35120386201052
    700 904.21 19.00945803934174
    800 1036.47 20.050166880083363
    900 1164.8 22.512847061806355
    1000 1290.54 20.7332358957079
    1100 1422.72 22.120459929096125
    1200 1543.71 24.42026604972586
    1300 1678.18 23.677709758208014
    1400 1811.22 27.450443870489025
    1500 1938.18 27.38700569262876
    1600 2068.11 28.27458972721663
    1700 2196.37 32.0468895731416
    1800 2328.73 32.456919388038294
    1900 2451.86 36.9184249529974
    2000 2583.2 35.57038832023801
}{\GenericArityThreeHack}
\pgfplotstableread{
  x y dev
    100 95.2 5.126678094324657
    200 192.23 7.581016962243138
    300 291.15 10.261603434673596
    400 383.3 11.106918400872328
    500 480.04 12.660221645816954
    600 573.45 12.619029241185673
    700 673.21 12.57968138858785
    800 767.7 14.45962683062077
    900 863.4 17.003861712308563
    1000 960.83 17.622331447812623
    1100 1055.64 19.851224423473507
    1200 1154.61 18.237432368454154
    1300 1249.62 22.557571686776047
    1400 1341.28 21.07757676831572
    1500 1442.43 19.521681807827438
    1600 1532.97 24.022907838355064
    1700 1635.36 21.030771106484266
    1800 1726.41 24.221659478218825
    1900 1821.32 22.522213501565542
    2000 1916.63 27.225083131464487
}{\GenericArityFourPure}
\pgfplotstableread{
  x y dev
    100 93.32 4.95816844645625
    200 188.19 5.923237921504373
    300 279.67 6.78687830830325
    400 373.67 8.702786039452246
    500 467.22 9.921489787121665
    600 561.56 10.242829470197933
    700 651.29 10.673572575063682
    800 746.3 12.17589936113654
    900 841.67 14.140032003164109
    1000 935.51 13.747723411165175
    1100 1025.18 15.017417497120737
    1200 1122.03 15.043677150956878
    1300 1214.4 13.851302677314722
    1400 1305.76 15.418813118388778
    1500 1399.11 14.498968963691906
    1600 1495.44 14.684782200728357
    1700 1584.74 16.007965189084764
    1800 1679.65 18.127703560121354
    1900 1774.58 15.38671865964642
    2000 1868.98 18.192561472461573
}{\GenericArityFourHack}
\pgfplotstableread{
  x y dev
    100 71.57 4.434153489868968
    200 141.32 5.880768176304384
    300 207.9 7.568542018303676
    400 278.99 7.158374342846523
    500 348.81 9.838796633324582
    600 415.62 9.814213563948357
    700 487.0 11.03255055357426
    800 556.83 11.511132073612982
    900 625.66 12.141605571693793
    1000 695.09 12.540918884381895
    1100 766.66 13.705576539599793
    1200 830.47 15.66405845471496
    1300 900.38 15.192887756573413
    1400 972.48 16.898078308051787
    1500 1040.4 15.333992080711186
    1600 1108.13 16.099222015207747
    1700 1178.63 18.670808523175722
    1800 1248.99 20.08491820730021
    1900 1319.89 18.251273651419236
    2000 1388.12 19.19505407676937
}{\GenericArityFivePure}
\pgfplotstableread{
  x y dev
    100 66.41 3.035463787305063
    200 132.3 4.806351017248955
    300 196.62 5.153757087912862
    400 262.62 6.203257534586485
    500 328.29 7.641321742983863
    600 390.93 7.2254530458076935
    700 458.3 8.295672475344105
    800 526.21 8.401893725929781
    900 585.5 8.731112370845036
    1000 654.31 9.163713022767363
    1100 720.36 10.288288919392231
    1200 782.97 9.367313684112728
    1300 850.38 11.000257113615874
    1400 917.55 11.70804350983899
    1500 975.89 12.519798462249842
    1600 1044.13 12.293264148888337
    1700 1109.93 12.210489777484796
    1800 1171.52 12.445014416961673
    1900 1240.85 13.505984457329124
    2000 1306.35 12.874197042726559
}{\GenericArityFiveHack}
\pgfplotstableread{
  x y dev
    100 53.68 3.074430225606236
    200 103.25 3.9628960943412905
    300 152.97 5.054161201875682
    400 200.05 5.49080958502386
    500 253.97 7.565812259864444
    600 306.17 7.293355215744691
    700 355.95 7.027183869183524
    800 404.74 7.442194745252405
    900 454.52 8.589963605713049
    1000 507.41 8.78025217649404
    1100 556.16 10.114216417368862
    1200 606.61 9.020621717022395
    1300 654.66 10.901969055768285
    1400 706.58 10.39345170690846
    1500 758.51 10.3587127237379
    1600 807.07 10.749916607616408
    1700 855.74 11.331211922665807
    1800 906.79 11.116731406835248
    1900 958.44 11.444481710246267
    2000 1010.76 12.632425808932414
}{\GenericAritySixPure}
\pgfplotstableread{
  x y dev
    100 50.12 2.731337038924713
    200 96.75 3.3736201818392555
    300 142.79 3.745151748081991
    400 189.05 3.9090028178939797
    500 239.92 5.250646905791033
    600 287.32 4.846908834765236
    700 332.45 5.770658978877983
    800 379.8 5.872380137972038
    900 426.87 6.84408614436691
    1000 477.0 7.015135152805056
    1100 521.91 6.479329994043917
    1200 569.95 7.36545993132812
    1300 614.85 7.671770369742603
    1400 664.84 7.976885800279902
    1500 714.76 8.056768282103887
    1600 758.23 7.913490590967642
    1700 804.69 7.394640428763719
    1800 853.15 9.581521567869869
    1900 904.07 8.907215325144264
    2000 951.86 9.44630759994228
}{\GenericAritySixHack}

\pgfplotstableread{
x y dev
2 2.6866 1.4530628269824817
3 3.7151 1.900761236801722
4 4.7269 2.0691410112538637
5 5.5443 2.0928152086483203
6 6.2225 2.0781784409821253
7 6.8715 2.028200954025743
8 7.4118 1.9847454953892614
9 7.8818 1.9640810883087676
10 8.2976 1.9262412311359836
11 8.7572 1.87440643936965
12 9.0693 1.8580211877891617
13 9.5019 1.8250286197608516
14 9.8415 1.82491391059355
15 10.18 1.8005344185600245
16 10.4958 1.766492119835464
17 10.8287 1.7179788866994323
18 11.1219 1.7511559167318285
19 11.4624 1.7133825632300783
20 11.7321 1.6857680063926954
21 12.0472 1.7184491458133324
22 12.355 1.678111023043644
23 12.6379 1.6533169495573834
24 12.9301 1.6434975200497988
25 13.1905 1.6127212521026528
26 13.4811 1.6212049902964165
27 13.7737 1.6083988947709413
28 14.0448 1.5904860655134487
29 14.3001 1.5860931748027494
30 14.5698 1.589773788332711
31 14.8579 1.5518854419197263
32 15.1332 1.5517404904993464
}{\UnrestrictedPure}
\pgfplotstableread{
x y dev
2 2.4504 0.9964131291349018
3 2.9984 1.0604291070077054
4 3.5426 1.0996845777844453
5 4.0366 1.1731999317285813
6 4.5476 1.2308069420396412
7 5.001 1.3052085492118721
8 5.4971 1.3569730017847803
9 5.9137 1.3698320897086163
10 6.3605 1.3579853321801114
11 6.7845 1.3701268097324064
12 7.1556 1.3909644561280585
13 7.5817 1.372484419346252
14 7.9582 1.360013868963201
15 8.3057 1.366760517951646
16 8.659 1.3323274783816905
17 9.0176 1.330363569375946
18 9.347 1.349063748419469
19 9.6835 1.314344133037523
20 9.9831 1.3227960420778702
21 10.3303 1.3011422701306674
22 10.6609 1.3100697761119164
23 10.967 1.3106039708000468
24 11.2728 1.3075745237322138
25 11.5749 1.2956688874039621
26 11.8745 1.2909362500295485
27 12.2032 1.3006455811991602
28 12.5099 1.3000273062901093
29 12.787 1.314459501231613
30 13.1087 1.289205381252936
31 13.3948 1.2924008627354076
32 13.7009 1.2915130624965012
}{\UnrestrictedHack}

\pgfplotscreateplotcyclelist{myplotcycle}{%
black\\%1
red!80!black\\%2
violet!80!black\\%3
gray!80!black\\%4
orange\\%5
brown!80!black\\%6
cyan!80!black\\%
green!70!black\\%7
green\\%8
blue!60!black\\%9
teal\\%10
magenta!70!black\\%11
yellow!90!black\\%12
}

\begin{document}

\title{Better Fixed-Arity Unbiased Black-Box Algorithms\footnote{
An extended abstract of this work will appear in proceedings of Genetic and Evolutionary
Computation Conference 2018. The paper is accompanied with the source code,
which is available on GitHub: \url{https://github.com/mbuzdalov/unbiased-bbc}.
}}

\author{Nina Bulanova \and Maxim Buzdalov}

\maketitle

\begin{abstract}
In their GECCO'12 paper, Doerr and Doerr proved that the $k$-ary unbiased black-box complexity
of \textsc{OneMax} on $n$ bits is $O(n/k)$ for $2\le k\le O(\log n)$.
We propose an alternative strategy for achieving this unbiased black-box complexity 
when $3\le k\le\log_2 n$.
While it is based on the same idea of block-wise optimization, 
it uses $k$-ary unbiased operators in a different way.

For each block of size $2^{k-1}-1$ we set up, in $O(k)$ queries, a virtual coordinate system,
which enables us to use an arbitrary unrestricted algorithm to optimize this block.
This is possible because this coordinate system introduces a bijection between unrestricted 
queries and a subset of $k$-ary unbiased operators.
We note that this technique does not depend on \textsc{OneMax} being solved and can be used
in more general contexts.

This together constitutes an algorithm which is conceptually simpler than the one by Doerr 
and Doerr, and at the same time achieves better constant factors in the asymptotic notation.
Our algorithm works in $(2+o(1))\cdot n/(k-1)$, where $o(1)$ relates to $k$. 
Our experimental evaluation of this algorithm shows its efficiency already for $3\le k\le6$.
\end{abstract}

\section{Introduction}

Unbiased black-box complexity of a problem is one of the measures for how complex this problem is
for solving it by evolutionary algorithms and other randomized heuristics. Often, complexities
of rather simple problems are studied, such as the famous \textsc{OneMax} problem, defined on bit strings of length $n$ as follows:
\begin{equation*}
\textsc{OneMax}_{z}: \{0,1\}^n \to \mathbb{R}; x \mapsto |\{i \in [1..n] \mid x_i = z_i\}|.
\end{equation*}

The notion of unbiased black-box complexity was introduced in~\cite{unbiased-bbc} for pseudo-Boolean problems
(see also the journal version~\cite{unbiased-bbc-algorithmica}) partially as a response
to unrealistically low black-box complexities of various NP-hard problems~\cite{droste-jansen-wegener}.
Since evolutionary algorithms and other randomized search heuristics are designed as
general-purpose solvers, they shall not prefer one instance of a problem over another one.
This means that unbiased black-box algorithms are a better model of randomized search heuristics,
since the unbiased model does not allow certain ways of being ``too fast''.
Unfortunately, the ways were found to perform most of the work without making queries
in the unbiased model too~\cite{too-fast-unbiased-bbc,unbiased-partition-is-polynomial}.
In fact, it was shown that, with a proper notion of unbiasedness for the given type of individuals,
the unbiased black-box complexity coincides to the unrestricted one~\cite{generic-unbiased-algorithms}.
Several alternative restricted models of black-box algorithms were subsequently introduced
as a reaction, namely ranking-based algorithms~\cite{ranking-based-complexity},
limited-memory algorithms~\cite{mastermind-constant-memory},
and elitist algorithms~\cite{elitist-complexity}. 
%Some of these results were successfully used to explain
%high black-box complexities of certain problems such as \textsc{LeadingOnes}~\cite{doerr-bbc-leadingones}.

One of possible restrictions to the unbiased black-box search model is the use of unbiased operators
with restricted arity. The original paper~\cite{unbiased-bbc} studied mostly unary unbiased black-box complexity,
e.g.~the class of algorithms allowing only unbiased operators taking one individual and producing another
one, or mutation-based algorithms. This model appeared to be quite restrictive, e.g.~the unary unbiased
black-box complexity of \textsc{OneMax} was proven to be $\Theta(n \log n)$~\cite{unbiased-bbc,doerr-doerr-yang-optimal-parameter-choices-gecco}. 
Together with the rather old question, whether crossover is useful in evolutionary algorithms
(which was previously positively, but only in artificial settings~\cite{jansen-crossover}),
this inspired a number of works on higher-arity unbiased algorithms,
since many crossovers are binary unbiased operators.

The theorem that for $k \ge 2$ the
$k$-ary unbiased black-box complexity of \textsc{OneMax} is $O(n / \log k)$, which was proven 
in~\cite{doerr-johannsen-faster-blackbox}, was the first signal that higher arities are useful.
Among others, an elegant crossover-based algorithm with the expected running time of $2n - O(1)$
was presented, which works for all linear functions. Several particular properties of this algorithm
inspired the researchers to look deeper for faster general-purpose algorithms that use crossover.
The first reported progress of algorithms using crossover on simple problems like \textsc{OneMax}
was made in~\cite{sudholt-crossover-speeds-up,sudholt-crossover-speeds-up-evco}, where an algorithm was presented
with the same $O(n \log n)$ asymptotic but with a better constant factor.
An algorithm, called the $(1 + (\lambda,\lambda))$ genetic algorithm, was
presented in~\cite{learning-from-black-box-thcs} along with the proof of the $O(n \sqrt{\log n})$ runtime
on \textsc{OneMax}, which was faster than any evolutionary algorithm before,
and improved performance on some other problems.
With the use of self-adaptation for its parameter $\lambda$, the $O(n)$ bound was proven for the runtime
on \textsc{OneMax}~\cite{doerr-doerr-lambda-lambda-self-adjustment}, and similar improvements were shown
later on a more realistic problem, MAX-SAT~\cite{buzdalovD-gecco17-3cnf}.
Experiments show that the constant in $O(n)$ is rather small.
%These improvements are not only theoretical, since this algorithm finds the optimum on logarithmically 
%dense random MAX-SAT problems in at most $15 \cdot n$ queries, where $n$ is the number of variables,
%within 45 minutes on $n \approx 10^6$.

After such a great success already for binary unbiased operators, what can we expect from higher arity
algorithms? Some of the existing algorithms already feature as much as quaternary ($k = 4$) operators,
the notable example of which is differential evolution~\cite{differential-evolution}.
The $O(n / \log k)$ result on \textsc{OneMax} does not look very inspiring in this aspect,
since it suggests that the arity should grow exponentially to get noticeable runtime improvements.
However, soon a better result appeared in~\cite{doerr2012reducing},
which shows that the $k$-ary unbiased black-box complexity of \textsc{OneMax} is $O(n / k)$ 
for $2 \le k \le O(\log n)$.
While there are still no matching lower bounds for arities greater than one,
this result suggests that higher arities generally pay off.

The result presented in~\cite{doerr2012reducing} is rather complicated and non-trivial to use
for two reasons. First, it has to use, as a building block, a derandomized unrestricted algorithm to solve
\textsc{OneMax}, proposed in~\cite{erdos-renyi-two-problems}, which runs in
$(1+\delta) \log_2(9) \cdot n / \log_2 n$ for $\delta$ decreasing with $n$.
The corresponding sequence is only proven to exist, but no way to construct it has ever been proposed.
Second, this algorithm needs $4\ell$ bits to optimize a piece of $\ell$ bits and encodes the fitness
values of several queried strings into some of these bits to make the final choice. This
not only complicates the algorithm but also increases the arity needed to
find the optimum within the given number of fitness queries.

In this paper we aim to improve this situation. We propose an algorithm which, like the one 
from~\cite{doerr2012reducing}, optimizes \textsc{OneMax} by blocks of $\ell$ bits but does it
in a simpler and more explicit manner and performs fewer queries. For $k$-ary operators,
$k \ge 3$, we set $\ell = 2^{k-1}-1$ and initialize a ``virtual coordinate system'' in $k$ queries
very similar to ``storage initialization'' in~\cite{doerr2012reducing}. However, subsequently
we use the opportunities offered by this coordinate system in a different manner. We notice that
it introduces a bijection between bit strings of length $\ell$ and 
$2^\ell$ different $k$-ary unbiased operators. We use this fact to optimize these $\ell$ bits by simulating an 
arbitrary algorithm for solving \textsc{OneMax} of length $\ell$, regardless of unbiasedness or arity of this algorithm.

Note that this algorithm operates not on the individuals of the enclosing $k$-ary unbiased algorithm,
and even not on their parts, but in fact on the subset of $k$-ary unbiased operators, where each operator
is unambiguously defined by a bit string of length $\ell$.
This subset is isomorphic to a set of bit strings of length $\ell$, and the above-mentioned bijection
effectively introduces an instance of the \textsc{OneMax} problem defined on this subset. In simple words,
each such $k$-ary unbiased operator has an associated ``fitness value''. It is computed
by applying this operator to a predefined sequence of already queried strings, computing the fitness of the resulting
string and translating it back using a simple linear transformation which is detailed later.

While the $k$-ary unbiased algorithm cannot access the particular bits of the bit string,
the definition of a $k$-ary unbiased algorithm does not prevent it from 
setting the parameters of the $k$-ary unbiased operators, which it uses, in an arbitrary way depending on the
preceding query history. Our algorithm is thus able to run an arbitrary optimizer on its $k$-ary unbiased operators,
while the fitness values of these operators are computed in a completely unbiased way with the use of at most $k$-ary 
unbiased operators.

In particular, we can use the random sampling algorithm from~\cite{erdos-renyi-two-problems}
which works in expected $(2 + o(1)) \cdot \ell / \log \ell$ time.
Thus, the overall number of queries needed by the proposed algorithm to solve the \textsc{OneMax} problem
sums up from expected $(2 + o(1)) \cdot n / (k - 1)$ queries from solving each block
and $O(n \cdot k / 2^k)$ additional work coming from the initialization of coordinate systems and 
the aggregation of answers found for each block.
Note that the second addend is negligibly small in $k$ compared to the first one,
so it hides entirely in $o(1)$ inside the first addend.

The rest of the paper is structured as follows.
Section~\ref{operators} introduces the general form of unbiased operators, which is used
afterwards to describe the used operators in the concise way without the need to prove their unbiasedness.
Section~\ref{existing} gives a short overview of the existing $O(n/k)$ algorithm 
from~\cite{doerr2012reducing} so that one can clearly see the differences between this algorithm
and the proposed one.
Section~\ref{system} describes the ``virtual coordinate system'' that we use to optimize blocks.
In Section~\ref{algorithms} the proposed algorithm is described in the general form,
along with a faster custom version for $k = 3$. % TODO: add k=4 if it is better than the generic one
Section~\ref{experiments} presents the empirical study of the proposed algorithms for $3 \le k \le 6$,
which shows that the improvements from higher arities are clearly seen already
for these values of $k$. This empirical study also suggests that a simple modification to
the $(2 + o(1)) \cdot n / \log n$ random sampling algorithm from~\cite{erdos-renyi-two-problems}
might reduce its running time, which seems to be interesting on its own.
Section~\ref{conclusion} concludes.

%\cite{doerr-doerr-lambda-lambda-fixed-tight,
%doerr-bbc-leadingones,
%doerr-unbiased-jump-gecco}

\section{Unbiased Operators: General Form}\label{operators}

In this section we give the formal definition of $k$-ary unbiased operators,
and then we show how to define every possible $k$-ary unbiased operator in a concise form,
which additionally enables to implement this operator in a constructive way.
We also introduce a convenient notation for $k$-ary unbiased operators, which we will use in subsequent sections.
In what follows we consider only operators which take and produce bit strings of length $n$.

Note that most of this section seems to belong to a common sense in theory of evolutionary computation.
However, no literature seems to cover this fact. In particular, the paper~\cite{doerr2012reducing}
still uses the explicit distribution notation, and the authors still have to prove that certain distributions are unbiased.

\begin{definition}
A $k$-ary operator $\theop$, which produces a search point $y$ from the given $k$ search points $x^{(1)}, \ldots, x^{(k)}$
with probability $P_{\theop}(y \mid x^{(1)}, \ldots, x^{(k)})$,
is \emph{unbiased} if the following relations hold for all search points $x^{(1)}, \ldots, x^{(k)}, y, z$ and all permutations $\pi$ over $[1..n]$:
\begin{align}
    P_{\theop}(y \mid x^{(1)}, \ldots, x^{(k)}) &= P_{\theop}(y \oplus z \mid x^{(1)} \oplus z, \ldots, x^{(k)} \oplus z), \label{flip-inv} \\
    P_{\theop}(y \mid x^{(1)}, \ldots, x^{(k)}) &= P_{\theop}(\pi(y) \mid \pi(x^{(1)}), \ldots, \pi(x^{(k)})), \label{swap-inv}
\end{align}
where $a \oplus b$ is the bitwise exclusive-or operation applied to $a$ and $b$, and $\pi(a)$ is an application of permutation $\pi$ to $a$.
\end{definition}

This definition is equivalent to Definition~1 in~\cite{doerr2012reducing}.
In simple words, \eqref{flip-inv} declares that $\theop$ is invariant under flipping the $i$-th bits, for any $i$, in every argument and in the result simultaneously,
and \eqref{swap-inv} declares that $\theop$ is invariant under permuting bits in the same way in arguments and the result.

It is clear that the only $0$-ary unbiased operator is the operator that produces a bit string with every bit set with probability $0.5$.
The standard bit mutation is an example of the $1$-ary, or \emph{unary}, unbiased operator, while the uniform crossover operator,
as well as the crossover operator from the $(1+(\lambda,\lambda))$ algorithm~\cite{learning-from-black-box-thcs}\footnote{%
This crossover is called ``biased'' in that paper, which can cause confusion. In fact, the ``biasedness'' of this crossover
is about choosing the bits from the parent with much greater probability than from the child. However, in the terms of the current paper,
this crossover is unbiased.}
are $2$-ary, or \emph{binary}, unbiased operators.

We are going to characterize all unbiased operators for $k \ge 1$. To do this, we note that the set of bit indices,
$\{1, 2, \ldots, n\}$, can be split into $2^{k-1}$ disjoint sets according to whether the $i$-th bit
is same in the first argument $x^{(1)}$ and each of other arguments, $x^{(2)}$ to $x^{(k)}$.
More formally, we introduce sets $S_0, S_1, \ldots, S_{2^{k-1}-1}$, such that $S_i \cap S_j = \emptyset$ if $i \ne j$
and $\bigcup S_i = \{1, 2, \ldots, n\}$. It is easier to construct these sets by defining the index of the
set, to which a particular bit index $i$ belongs, as follows:
\begin{equation*}
    i \in S_j \leftrightarrow j = \sum_{t=2}^{k} 2^{t-2} \cdot \left[x^{(1)}_i \ne x^{(t)}_i\right],
\end{equation*}
where $[P]$ is the Iverson bracket, which yields 1 if $P$ is true and 0 otherwise.
For instance, $S_0$ is the set of bit indices at which bits are equal in all arguments,
while $S_{2^{k-1}-1}$ contains indices at which $x^{(1)}$ differs from all other arguments.

Note that the sets $S_i$ do not change if all arguments $x^{(i)}$ are replaced with $x^{(i)} \oplus z$ for an arbitrary bit string $z$
same to all arguments. In the same time, it holds for an arbitrary permutation $\pi$ that whenever $i \in S_j$,
then $\pi(i) \in S^{*}_j$, where the set $S^{*}_j$ is built from $\pi(x^{(1)}), \pi(x^{(2)}), \ldots, \pi(x^{(k)})$.
We may define $\pi(S_j) = \{ \pi(i) \mid i \in S_j \}$, which is the same as $S^{*}_j$. Note also that $|S_j| = |\pi(S_j)|$ for an arbitrary $\pi$.

The following lemma states that a certain kind of $k$-ary operator is unbiased. We will show later that
\emph{every} $k$-ary unbiased operator can be described as an operator of this kind.

\begin{lemma}
Consider a $k$-ary operator $\theop$ on bit strings of length $n$ which, given $k$ arguments $x^{(1)}, \ldots, x^{(k)}$:
\begin{enumerate}
    \item Computes sets of bit indices $S_0, \ldots, S_{2^{k-1}-1}$ as above.
    \item Computes sizes of these sets $n_j = |S_j|$.
    \item Based only on the values of $n_j$, chooses values $d_0, \ldots, d_{2^{k-1}-1}$ such that $0 \le d_j \le n_j$.
          These values can heavily depend on the $n_j$ values and need not be random. \label{genform-3}
    \item Chooses uniformly at random subsets $F_j \subseteq S_j$, such that $|F_j| = d_j$, for all $0 \le j < 2^{k-1}$, and computes $F = \bigcup F_j$. \label{genform-4}
    \item Returns a bit string $y$ which differs from $x^{(1)}$ exactly at indices from the set $F$.
\end{enumerate}
The operator $\theop$ is unbiased.
\label{genform}
\end{lemma}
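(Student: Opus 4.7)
The plan is to verify directly that $\theop$ satisfies both invariance conditions \eqref{flip-inv} and \eqref{swap-inv} by tracing how each of the five construction steps transforms when we either XOR all arguments with a common bit string $z$ or apply a common permutation $\pi$ to all coordinate indices. The crucial observation, used for both parts, is that the partition $S_0,\ldots,S_{2^{k-1}-1}$ encodes only the pattern of agreements and disagreements between $x^{(1)}$ and the other $x^{(t)}$, and Step~\ref{genform-3} chooses the numbers $d_j$ based purely on the sizes $n_j$.

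For the XOR-invariance \eqref{flip-inv}, I would first observe that for any bit string $z$, the relation $x^{(1)}_i \ne x^{(t)}_i$ is preserved by simultaneously replacing each $x^{(t)}$ by $x^{(t)} \oplus z$, since $a \oplus z_i \ne b \oplus z_i \Leftrightarrow a \ne b$. Hence the sets $S_j$, and therefore the $n_j$ and the $d_j$, are identical in both executions. Since Step~\ref{genform-4} samples each $F_j$ uniformly from subsets of $S_j$ of size $d_j$, the same distribution over $F$ arises in both executions. It then suffices to notice that returning "$x^{(1)}$ flipped at positions $F$" under inputs $(x^{(1)} \oplus z, \ldots)$ produces exactly $y \oplus z$ whenever the same $F$ would produce $y$ under inputs $(x^{(1)}, \ldots)$; this bijection between outcomes gives the required equality of probabilities.

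For the permutation-invariance \eqref{swap-inv}, I would rely on the observation already made in the paragraph preceding the lemma: when we replace each $x^{(t)}$ by $\pi(x^{(t)})$, the partition becomes $\pi(S_0), \ldots, \pi(S_{2^{k-1}-1})$, and $|\pi(S_j)| = |S_j| = n_j$. Consequently the $d_j$ agree across the two executions. The key probabilistic fact is that the pushforward under $\pi$ of the uniform distribution on size-$d_j$ subsets of $S_j$ equals the uniform distribution on size-$d_j$ subsets of $\pi(S_j)$; therefore sampling $F'_j \subseteq \pi(S_j)$ in the permuted execution has the same distribution as $\pi(F_j)$ for $F_j$ sampled in the original execution. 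Combined with Step~5, this yields output $\pi(y)$ whenever the original execution would yield $y$, which is exactly \eqref{swap-inv}.

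There is no genuine obstacle here; the proof is a bookkeeping argument, and the only thing one must be slightly careful about is that the randomness in Step~\ref{genform-4} is drawn \emph{conditionally} on the sizes $n_j$, so that a common coupling of the two executions (identical $d_j$ and a uniform subset-of-size-$d_j$ sampling that commutes with both $\oplus z$ and $\pi$) can be used to match outputs one-to-one.
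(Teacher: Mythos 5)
Your proof is correct and follows essentially the same route as the paper's: both verify \eqref{flip-inv} by noting that the partition $S_0,\ldots,S_{2^{k-1}-1}$ is unchanged under a common $\oplus z$, and \eqref{swap-inv} by noting that the probability of selecting a given $F$ depends only on the sizes $n_j$ and $d_j$, which are preserved under $\pi$. Your phrasing via pushforward/coupling is a slightly more explicit rendering of the same bookkeeping argument.
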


\begin{proof}
    To prove unbiasedness of $\theop$, one needs to show that \eqref{flip-inv} and \eqref{swap-inv} hold.
    The first holds trivially since, as shown above, applying the $(z \oplus \cdot)$ operation, for an arbitrary bit string $z$, to all arguments
    does not change the sets $S_i$, which, in turn, leaves intact the distribution of the indices which are going to be flipped.

    To prove the second property, consider $P_{\theop}(y \mid x^{(1)}, \ldots, x^{(k)})$. This is exactly the probability
    that $F = \{ i \mid x^{(1)}_i \ne y_i \}$ is chosen at step~\ref{genform-4}, given the arguments $x^{(1)}, \ldots, x^{(k)}$.
    This event is equivalent to the event that $F_j \subseteq S_j$ for all $j$ are chosen. The probability of this event
    depends only on $n_j$ and $d_j$, since every possible subset of $S_j$ of size $d_j$ is chosen with the same probability.
    We also note that $F_j = \{ i \mid y_i \ne x^{(1)}_i \} \cap S_j$.
    
    Now consider $P_{\theop}(\pi(y) \mid \pi(x^{(1)}), \ldots, \pi(x^{(k)}))$.
    As shown above, the sizes of the index sets do not change, that is, $|\pi(S_j)| = |S_j| = n_j$.
    One can see that $|\pi(F_j)| = |F_j| = d_j$ as well.
    This means that this new probability is exactly the same as the probability in the previous case,
    which proves the second property and the entire lemma.
\end{proof}

We prove now that every $k$-ary unbiased operator can be represented in the way described in Lemma~\ref{genform}.
We do not need this fact in the subsequent sections, since both the algorithm from~\cite{doerr2012reducing} and the proposed
algorithm use only operators which can be explicitly represented as Lemma~\ref{genform} requires.
However, we feel that this fact deserves a proof.

\begin{lemma}
Assume $\theop$ is a $k$-ary unbiased operator which operates on bit strings of length $n$.
Then for any $y, x^{(1)}, \ldots, x^{(k)}, \tilde{y}, \tilde{x}^{(1)}, \ldots, \tilde{x}^{(k)}$ such that, for all $j \in \{0,\ldots, 2^{k-1}-1\}$:
\begin{enumerate}
\item The index sets $S_j$ generated from $x^{(1)}, \ldots, x^{(k)}$
      and the index sets $\tilde{S}_j$ generated from $\tilde{x}^{(1)}, \ldots, \tilde{x}^{(k)}$ 
      satisfy $|S_j| = |\tilde{S}_j|$.\label{every-is-genform-1}
\item $|\{ i \mid y_i \ne x^{(1)} \} \cap S_j| = |\{ i \mid \tilde{y}_i \ne \tilde{x}^{(1)} \} \cap \tilde{S}_j|$;\label{every-is-genform-2}
\end{enumerate}
it holds that $P_{\theop}(y \mid x^{(1)}, \ldots, x^{(k)}) = P_{\theop}(\tilde{y} \mid \tilde{x}^{(1)}, \ldots, \tilde{x}^{(k)})$.
\label{every-is-genform}
\end{lemma}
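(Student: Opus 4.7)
The plan is to match the two configurations by composing the two symmetries granted by unbiasedness: first an XOR to align the first arguments, then a permutation to align everything else. After that, the conclusion will be immediate from \eqref{flip-inv} and \eqref{swap-inv}.

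First, I apply \eqref{flip-inv} with $z = x^{(1)}$ on the left-hand side and with $\tilde z = \tilde x^{(1)}$ on the right-hand side. This rewrites the two probabilities as $P_{\theop}(y \oplus x^{(1)} \mid 0^n, x^{(2)} \oplus x^{(1)}, \ldots, x^{(k)} \oplus x^{(1)})$ and $P_{\theop}(\tilde y \oplus \tilde x^{(1)} \mid 0^n, \tilde x^{(2)} \oplus \tilde x^{(1)}, \ldots, \tilde x^{(k)} \oplus \tilde x^{(1)})$. The remark preceding Lemma~\ref{genform} about invariance of the $S_j$ under a common XOR shows that the index sets $S_j$ and $\tilde S_j$ are unchanged by this transformation, and that the sets $F_j := \{i \mid y_i \ne x^{(1)}_i\} \cap S_j$ become exactly $\{i \mid (y \oplus x^{(1)})_i = 1\} \cap S_j$ (and similarly on the tilde side). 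Thus we may assume without loss of generality that $x^{(1)} = \tilde x^{(1)} = 0^n$.

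With $x^{(1)} = 0^n$ in place, every remaining argument is fully determined by the partition: for $2 \le t \le k$, we have $x^{(t)}_i = 1$ if and only if $i$ lies in one of the sets $S_j$ whose index $j$ has a one in position $t-2$ of its binary representation; and likewise $y_i = 1$ if and only if $i \in F := \bigcup_j F_j$. The tilde configuration has the same structure with $\tilde S_j$ and $\tilde F_j$. Because $|S_j| = |\tilde S_j|$ and $|F_j| = |\tilde F_j|$ for every $j$, I can pick, for each $j$, an arbitrary bijection $\pi_j : S_j \to \tilde S_j$ that sends $F_j$ onto $\tilde F_j$; concatenating the $\pi_j$ yields a permutation $\pi$ of $[1..n]$ with $\pi(S_j) = \tilde S_j$ and $\pi(F_j) = \tilde F_j$ for all $j$. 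By the bit-by-bit description above, $\pi$ then satisfies $\pi(x^{(t)}) = \tilde x^{(t)}$ for all $t \in [1..k]$ and $\pi(y) = \tilde y$. Applying \eqref{swap-inv} with this $\pi$ finishes the argument.

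The only non-routine step is the combinatorial verification in the previous paragraph — that after the XOR reduction, the two configurations really are related by a single permutation of coordinates. This hinges on the observation that, once $x^{(1)} = 0^n$, the pair $(x^{(2)}, \ldots, x^{(k)})$ is reconstructed from the partition $(S_0, \ldots, S_{2^{k-1}-1})$ by the defining formula $i \in S_j \leftrightarrow j = \sum_{t=2}^{k} 2^{t-2}[x^{(t)}_i = 1]$, so matching the $S_j$ via $\pi$ automatically matches the $x^{(t)}$; and matching the $F_j$ additionally matches $y$. I do not expect any real obstacle beyond writing this bijection construction cleanly.
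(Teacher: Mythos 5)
Your proof is correct and takes essentially the same approach as the paper: both arguments realize the two configurations as related by a composition of a coordinate permutation (built from bijections $S_j\cap F \to \tilde S_j\cap\tilde F$ and $S_j\setminus F\to\tilde S_j\setminus\tilde F$) and an XOR, then invoke \eqref{flip-inv} and \eqref{swap-inv}. The only difference is the order of operations — you normalize to $x^{(1)}=\tilde x^{(1)}=0^n$ first, which makes the verification that $\pi$ matches all arguments slightly cleaner than the paper's group-by-group check of $\tilde x^{(i)}=\pi(x^{(i)})\oplus z$.
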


\begin{proof}
Let $U = \{ 1, \ldots, n \}$, $F = \{ i \mid y_i \ne x^{(1)}_i \}$ and $\tilde{F} = \{ i \mid \tilde{y}_i \ne \tilde{x}^{(1)}_i \}$.
We define $S_j^0 = S_j \cap (U \setminus F)$, $S_j^1 = S_j \cap F$, $\tilde{S}_j^0 = \tilde{S}_j \cap (U \setminus \tilde{F})$,
$\tilde{S}_j^1 = \tilde{S}_j \cap \tilde{F}$.

By part~\ref{every-is-genform-2} of the lemma statement, $|S_j^0| = |\tilde{S}_j^0|$ and $|S_j^1| = |\tilde{S}_j^1|$.
This means that there exists a bijection between each such pair. Since $\bigcup S_j = U$ and $\bigcup \tilde{S}_j = U$,
there exists a permutation $\pi$ such that $\pi(S_j^0) = \tilde{S}_j^0$ and $\pi(S_j^1) = \tilde{S}_j^1$ for all $j$ simultaneously.

Define $z = \pi(y) \oplus \tilde{y}$. Since bits in every $S_j^0$ coincide in $x^{(1)}$ and $y$,
bits in $\pi(S_j^0) = \tilde{S}_j^0$ will also coincide in $\pi(x^{(1)})$ and $\pi(y)$, so they will also coincide
in $\pi(x^{(1)}) \oplus z$ and $\pi(y) \oplus z = \tilde{y}$. This means that $\tilde{x}^{(1)}$ coincides with $\pi(x^{(1)}) \oplus z$
in bits $\tilde{S}_j^0$ for every $j$. The same logic shows that $\tilde{x}^{(1)}$ coincides with $\pi(x^{(1)}) \oplus z$
in bits $\tilde{S}_j^1$ for every $j$, so $\tilde{x}^{(1)} = \pi(x^{(1)}) \oplus z$.

By construction of sets $S_j$ and $\tilde{S}_j$, it also holds that $\tilde{x}^{(i)} = \pi(x^{(i)}) \oplus z$ for all $1 \le i \le k$. 
This means that, in the statement to be proven, the arguments of $P_{\theop}(y \mid x^{(1)}, \ldots, x^{(k)})$ are translated into
the arguments of $P_{\theop}(\tilde{y} \mid \tilde{x}^{(1)}, \ldots, \tilde{x}^{(k)})$ by applying the permutation $\pi$ first,
and then $(z \oplus \cdot)$. Since the operator $\theop$ is unbiased, these two probabilities are equal.
\end{proof}

To represent an arbitrary $k$-ary unbiased operator in the form which Lemma~\ref{genform} specifies,
it is enough to iterate over all possible values of $n_j$, to generate an arbitrary set of arguments such that $|S_j| = n_j$,
and to measure, for every possible combination of values $d_j$, the probability of sampling an arbitrarily chosen bit string
which matches these values of $d_j$. By Lemma~\ref{every-is-genform}, the probabilities for all the remaining cases will be
automatically determined.

The only part of the unbiased operator, which is done at will, is the stage~\ref{genform-3} of the form required by Lemma~\ref{genform}.
This, together with Lemma~\ref{every-is-genform}, proves the following theorem.

\begin{theorem}
Every $k$-ary unbiased operator defined on its arguments $x^{(1)}, \ldots, x^{(k)}$ is uniquely described by the following, possibly non-deterministic, mapping:
\begin{equation*}
\langle n_0, n_1, \ldots, n_{2^{k-1}-1} \rangle \to \langle d_0, d_1, \ldots, d_{2^{k-1}-1} \rangle,
\end{equation*}
where $n_j$ is the size of the subset of bit indices $S_j$ such that:
\begin{equation*}
    S_j = \left\{ i \:\;\middle|\:\; j = \sum_{t=2}^{k} 2^{t-2} \cdot \left[x^{(1)}_i \ne x^{(t)}_i\right] \right\},
\end{equation*}
and $d_j$ indicates how many bits, chosen uniformly at random from $S_j$, to flip in $x^{(1)}$ to achieve the result.
\label{unbiased-repr}
\end{theorem}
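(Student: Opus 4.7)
The plan is to package the two preceding lemmas into the claim by showing that the conditional distribution of an unbiased $\theop$ on any input is fully parameterized by the vector $\langle n_0, \ldots, n_{2^{k-1}-1}\rangle$, via a distribution over $\langle d_0, \ldots, d_{2^{k-1}-1}\rangle$, and conversely that any such distribution specifies a unique unbiased operator (the one described in Lemma~\ref{genform}). The theorem then reduces to two verifications: (a) the mapping is well-defined on an arbitrary $k$-ary unbiased operator, and (b) the mapping uniquely recovers the operator's distribution.

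For (a), fix a sizes vector $\langle n_0, \ldots, n_{2^{k-1}-1}\rangle$ with $\sum_j n_j = n$, and pick any concrete arguments $x^{(1)}, \ldots, x^{(k)}$ realising those sizes. Every output $y$ now belongs to a unique class indexed by $\langle d_0, \ldots, d_{2^{k-1}-1}\rangle$, where $d_j = |\{i\mid y_i\ne x^{(1)}_i\}\cap S_j|$. By Lemma~\ref{every-is-genform}, any two outputs $y, \tilde{y}$ in the same class satisfy $P_{\theop}(y\mid x^{(1)}, \ldots, x^{(k)}) = P_{\theop}(\tilde{y}\mid x^{(1)}, \ldots, x^{(k)})$ (taking $\tilde{x}^{(i)}=x^{(i)}$). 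Summing over the class therefore yields a single probability that depends only on $\langle n_j\rangle$ and $\langle d_j\rangle$; dividing by the class size recovers the conditional probability of any individual $y$. Declaring this to be the $\langle n_j\rangle\to\langle d_j\rangle$ mapping defines the required non-deterministic mapping. A second application of Lemma~\ref{every-is-genform}, with $\tilde{x}^{(i)}$ now allowed to differ from $x^{(i)}$ provided the $\tilde{S}_j$ have the same sizes, shows that this mapping does not depend on the particular arguments chosen, so it is a function of $\langle n_j\rangle$ only.

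For (b), observe that the mapping together with Lemma~\ref{genform} describes a fully specified unbiased operator $\theop'$: given any inputs, compute $\langle n_j\rangle$, sample $\langle d_j\rangle$ from the mapping, and then sample a uniformly random subset of each $S_j$ of the prescribed size. By construction $\theop'$ and $\theop$ assign the same probability to every individual output on every input, so they are the same operator. Conversely, two distinct mappings must disagree at some $\langle n_j\rangle\to\langle d_j\rangle$ probability, which immediately yields differing operator probabilities at any arguments realising that $\langle n_j\rangle$, hence different operators. This proves that the operator and the mapping determine one another bijectively.

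The only genuinely delicate step is the well-definedness argument of paragraph two: one must be careful that Lemma~\ref{every-is-genform} is strong enough to bridge \emph{both} variations simultaneously, namely different outputs $y$ within the same $\langle d_j\rangle$ class, and different argument tuples sharing the same $\langle n_j\rangle$. Fortunately the hypotheses of Lemma~\ref{every-is-genform} ask exactly for equality of the $|S_j|$ and of the intersection sizes $|F\cap S_j|$, so both variations are covered in one stroke, and no further calculation is needed.
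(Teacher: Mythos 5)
Your proposal is correct and follows essentially the same route as the paper: the operator-to-mapping direction is extracted from Lemma~\ref{every-is-genform} (probabilities depend only on the $n_j$ and the intersection sizes $d_j$, hence are constant on each class and independent of the representative arguments), and the mapping-to-operator direction is exactly Lemma~\ref{genform}. The paper states this in a two-sentence sketch; your write-up merely makes the well-definedness and uniqueness steps explicit.
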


We give a few examples of the notation introduced in this theorem.
\begin{itemize}
    \item Single bit mutation: $\langle n \rangle \to \langle 1 \rangle$.
    \item Unary inversion operator: $\langle n \rangle \to \langle n \rangle$.
    \item Standard bit mutation: $\langle n \rangle \to \langle \text{Bin}(n, 1/n) \rangle$.
    \item Uniform crossover: $\langle n_0, n_1 \rangle \to \langle \text{Bin}(n_0, 1/2), \text{Bin}(n_1, 1/2) \rangle$.
    \item Crossover from~\cite{doerr-johannsen-faster-blackbox} for the binary case: $\langle n_0, n_1 \rangle \to \langle 0, 1 \rangle$.
\end{itemize}

The result presented in this section essentially means that a $k$-ary unbiased operator can distinguish at most $2^{k-1}$ groups of bits,
determined by which arguments have a certain bit different to that of the first argument,
and it is able to choose different numbers of bits to flip in each of these groups.

\section{The Existing Algorithm: Overview}\label{existing}

In this section we cover the $k$-ary unbiased algorithm for solving \textsc{OneMax} on bit strings of length $n$,
which was proposed in~\cite{doerr2012reducing}.

This algorithm uses operators with arity at most $k \ge 7$. For convenience, denote $\kappa = k - 7$.
The algorithm maintains two strings $x$ and $y$, which coincide in precisely those bits that are guessed right
and differ in all other bits. Initially $x$ is random, and $y$ is the inverse of $x$.
Unless $x = y$, the algorithm wants to choose at most $\ell = 2^{\kappa}$ bit indices among those which differ in $x$ and $y$ at random
and optimizes bits at these indices.

To optimize these bits, the algorithm uses the ``derandomized random sampling'' technique: before it starts its work, it precomputes
a string-distinguishing sequence $r_1, \ldots, r_t$ for $t = (1+o(1)) \log_2(9) \cdot \ell / \log_2 \ell$.
However, since the algorithm is unbiased, it cannot directly place the strings from this sequence into the $\ell$ chosen bits.
What is more, the algorithm requires some additional space to store the fitness values, received from the ``bigger'' \textsc{OneMax} of length $n$
and recoded to be understood by a solver of the ``smaller'' \textsc{OneMax} of length $\ell$. Since a fitness in the latter cannot be greater than
$\ell$, it cannot occupy more than $\kappa + 1$ bits. All fitness values corresponding to the queries from the sequence $r_1, \ldots, r_t$ 
cannot occupy more than $4\ell$ bits in total.

To accomodate them, the algorithm initializes the so-called \emph{storage} of length $4\ell$, of which $\ell'$ bits
($\ell' \le \ell$, where the inequality can be strict in the last run)
are those which it intends to optimize. First, a string $y_0$ is queried by a binary operator
$\langle n_0, n_1 \rangle \to \langle 4\ell - \ell' - q, \ell' + q\rangle$, where
$\max(0, 4\ell - \ell' - n_0) \le q \le n_1 - \ell'$ and
\begin{equation*}
P(q = q_0) = \frac{\binom{n_0}{4\ell - \ell' - q_0} \binom{n_1 - \ell'}{q_0}}{\binom{n_0 + n_1 - \ell'}{4\ell - \ell'}}
\end{equation*}
in order to distribute $4\ell - \ell'$ bits of storage uniformly at random, applied to $x$ and $y$.
In this way, the bits in which it differs from $x$ define the bits allocated for the storage,
and the $\ell$ bits to optimize are among them.

In order to address the individual bits of the storage, the strings $y_i$ are generated,
the $i$-th one by an $(i+2)$-ary unbiased operator
$\langle 0, n_1, \ldots, n_j, \ldots \rangle \to \langle 0, 0, \ldots, [j \bmod 2 = 0] \cdot n_j / 2, \ldots \rangle$
applied to $x, y, y_0, \ldots, y_{i-1}$. The only exception is the string $y_1$, where it is additionally ensured
that $y_1$ coincides with $x$ in the $\ell'$ bits which are to be optimized.
Our construction, described in detail in Section~\ref{system}, is very similar to this one,
so we will discuss its working principles later. Now we only state that
the strings $x, y, y_0, \ldots, y_{\kappa+2}$ introduce a bijection
between all bit strings of length $4\ell$ and all bit strings which are different from $x$ only in those $4\ell$ bits
in which $x$ and $y_0$ differ. What is more, this bijection can be expressed as a composition of some permutation and an
application of the operation $\oplus$ with some bit string.
Note that this bijection, which is Hamming-preserving,
is not explicitly mentioned in~\cite{doerr2012reducing},
where one of its directions is called the \texttt{write} procedure.
The bijection $\sigma$, which was mentioned in this paper, is a bijection between the indices of these strings.

The algorithm proceeds with morphing the strings $r_1$ to $r_t$ by the above mentioned bijection
(using $(\kappa + 5)$-ary operators applied each to $x$, $y$, $y_i$), and querying them. To morph these fitnesses back,
another string, $y_B$, must be queried beforehand by the $4$-ary operator
$\langle n_0, n_1, n_2, \ldots, n_7 \rangle \to \langle 0, n_1, 0, \ldots, 0 \rangle$
applied to strings $x$, $y_0$, $y_1$, $y_2$.%
%\footnote{%
%This might be a weak part of that paper, since usually $n_1 = \ell = \ell'$ and everything is good,
%but when it is not, either the construction of $y_i$ or this part is broken.
%However, in the latter case it is still possible to query $y_B$ by a $(\kappa+5)$-ary operator similar to other 
%strings.}
These morphed fitnesses, each of $\kappa+1$ bits, are then written to the storage
using $(\kappa + 6)$-ary operators with the help of the same bijection.
Finally, a $(\kappa + 7)$-ary operator chooses the bits consistent with the stored fitness values and the
pre-computed values $r_1, \ldots, r_t$ and writes them to the $\ell'$ bits which they should occupy.

Since every $\ell$ bits, chosen in this way, are optimized in $O(\ell / \log \ell)$ queries, the entire
running time of the algorithm is $O(n / \log \ell) = O(n / \kappa) = O(n / k)$.

In our opinion, all the major complications in this algorithm arise solely from the fact that the authors of~\cite{doerr2012reducing}
accidentally missed the fact that the system of strings $x, y, y_0, \ldots, y^{\kappa+2}$ actually introduced
a Hamming-preserving bijection between the $2^{4\ell}$ bit strings which are the candidates for an optimum in $4\ell$ bits
and the same number of $(\kappa+5)$-ary unbiased operators which take all uninteresting bits from the first argument
but can change every single bit among the $4\ell$ interesting bits, compared to its value in the first argument, in arbitrary way independently
of all other $4\ell - 1$ bits. With this weapon in hand, we can run an arbitrary unrestricted algorithm for solving \textsc{OneMax}
through the prism of this bijection, since the unbiased setting does not restrict us which $k$-ary unbiased operators
to apply. We develop this idea in next two sections.

\section{The Virtual Coordinate System}\label{system}

In this section, we assume that we have two strings $x$ and $y_0$, such that they differ in $\ell$ bits.
We need to establish a way to set each of these bits in either the same value as in $x$, or in a different value than in $x$,
independently of other bits, in a single query, while leaving all other $n - \ell$ bits in the same state as in $x$.
Since for $\ell = 1$ this problem is trivial, we consider $\ell \ge 2$.
We show a way to do this with the help of $k$-ary unbiased operators such that $2^{k-1}-1 \ge \ell$.

Recall that a $k$-ary unbiased operator can distinguish $2^{k-1}$ groups of bits. One of these groups has to be dedicated
to ``unrelated'' bits, i.e. the $n-\ell$ bits outside of the interesting region. For simplicity, we allocate the $0$-th group
for this purpose, that is, the group of bits which are equal throughout all arguments. Our aim is to design auxiliary queries
in such a way that each of the remaining $2^{k-1}-1$ groups consists of at most a single bit,
which enables subsequent fine manipulations with them.

We illustrate this technique on $\ell = 2^{k-1}-1$. It will also work with only minor changes for any $2^{k-2} \le \ell < 2^{k-1}$,
whereas smaller $\ell$ will only require a smaller $k$. It can also be modified to work slightly better
in the case of $2^{k-1} \ge n$, in which we do not need to care about the unrelated bits since there are none of them.

To create the next $k - 1$ bit strings $y_1, \ldots, y_{k-1}$, we use a set of very similar unbiased operators of arity $2, \ldots, k$ correspondingly,
which can be described by a single phrase: in each bit group except the $0$-th one, flip half of the bits \emph{rounded up}.
The formal notation for these operators is as follows:
\begin{equation*}
\langle n_0, n_1, \ldots, n_j, \ldots \rangle \to \langle 0, \lceil n_1/2 \rceil, \ldots, \lceil n_j/2 \rceil, \ldots \rangle.
\end{equation*}

Each time such an operator is applied, a new string is created which, if used together with the previous ones,
divides every bit group with the size greater than one (except the $0$-th one) into two parts of (almost) equal size.
When $y_{k-1}$ is created, every group of $2^{k-1}-1$ such groups contains a single element (at most one element if $\ell < 2^{k-1}-1$).
The algorithm for constructing strings $y_1, \ldots, y_{k-1}$
from $x$ and $y_0$ is outlined as Algorithm~\ref{algo-vcs}.
Figure~\ref{example-vcs} illustrates it for $k=5$ and $\ell = 15$. 

\begin{algorithm}[!t]
\caption{Construction of the virtual coordinate system}\label{algo-vcs}
\begin{algorithmic}
\Function{BuildCoordinates}{$x, y_0; k$}
    \State{-- $x$ and $y_0$ are strings which differ in $\ell$ bits, $2^{k-2} \le \ell < 2^{k-1}$}
    \State{-- $k$ is the maximum allowed arity}
    \State{$\textsc{FlipUpperHalf} := \langle n_0, \ldots, n_j, \ldots \rangle \to \langle 0, \ldots, \lceil n_j/2 \rceil, \ldots \rangle$}
    \For{$i \in [1; k-1]$}
        \State{$y_i \gets \Call{FlipUpperHalf}{x, y_0, \ldots, y_{i-1}}$; $\Call{Query}{y_i}$}
    \EndFor
    \State\Return{$(y_1, \ldots, y_{k-1})$}
\EndFunction
\end{algorithmic}
\end{algorithm}

\begin{figure}[!t]
\centering
\setlength{\tabcolsep}{1pt}
\begin{tabular}{|c|*{15}{>{\centering}p{1em}|}l}
\cline{1-16}
0\ldots0 & 1 & 0 & 1 & 0 & 1 & 0 & 1 & 0 & 1 & 0 & 1 & 0 & \cellcolor{gray!50!white} 1 & 0 & 1 & ~$y_4$ \\
\hhline{*{16}{=}~}
0\ldots0 & 0 & 1 & 1 & 0 & 0 & 1 & 1 & 0 & 0 & 1 & 1 & 0 & \cellcolor{gray!50!white} 0 & 1 & 1 & ~$y_3$ \\
\hhline{*{16}{=}~}
0\ldots0 & 0 & 0 & 0 & 1 & 1 & 1 & 1 & 0 & 0 & 0 & 0 & 1 & \cellcolor{gray!50!white} 1 & 1 & 1 & ~$y_2$ \\
\hhline{*{16}{=}~}
0\ldots0 & 0 & 0 & 0 & 0 & 0 & 0 & 0 & 1 & 1 & 1 & 1 & 1 & \cellcolor{gray!50!white} 1 & 1 & 1 & ~$y_1$ \\
\hhline{*{16}{=}~}
0\ldots0 & 1 & 1 & 1 & 1 & 1 & 1 & 1 & 1 & 1 & 1 & 1 & 1 & 1 & 1 & 1 & ~$y_0$ \\
\hhline{*{16}{=}~}
0\ldots0 & 0 & 0 & 0 & 0 & 0 & 0 & 0 & 0 & 0 & 0 & 0 & 0 & 0 & 0 & 0 & ~$x$ \\
\cline{1-16}
\multicolumn{1}{c}{\rule{0pt}{2.5ex}0} &
\multicolumn{1}{c}{8} &
\multicolumn{1}{c}{4} &
\multicolumn{1}{c}{12} &
\multicolumn{1}{c}{2} &
\multicolumn{1}{c}{10} &
\multicolumn{1}{c}{6} &
\multicolumn{1}{c}{14} &
\multicolumn{1}{c}{1} &
\multicolumn{1}{c}{9} &
\multicolumn{1}{c}{5} &
\multicolumn{1}{c}{13} &
\multicolumn{1}{c}{3} &
\multicolumn{1}{c}{11} &
\multicolumn{1}{c}{7} &
\multicolumn{1}{c}{15} &
\multicolumn{1}{c}{~Group No.} \\
\end{tabular}
\caption{Example of the virtual coordinate system for $k=5$ and $\ell=15$.
The bit value $0$ means that this value is the same as the value of $x$ at this position,
$1$ means this value negated. Group numbers are given for subsequent calls of $k$-ary unbiased operators
on strings $x, y_1, y_2, y_3, y_4$. A group number is formed by reading the column top to bottom 
from $y_4$ to $y_1$ as a binary number. The example for $11 = 1011_2$ is highlighted.
}\label{example-vcs}
\end{figure}

Rounding up, in the case of odd size of a group, ensures\footnote{%
Strictly speaking, rounding up is necessary only in the group 1, since this ensures that this group will have the size one already when querying $y_{k-1}$.
This means that the group 1 will be empty if a $(k+1)$-ary operator is called on $x, y_0, y_1, \ldots, y_{k-1}$, from which it follows that
the group 0 on $x, y_1, \ldots, y_{k-1}$ consists only of ``unrelated'' bits.
In all other groups rounding can be arbitrary: either up or down.} that in no such group bits in $x, y_1, \ldots, y_{k-1}$ coincide (note the absence of $y_0$).
This means that, in the subsequent queries, we can exclude $y_0$ from the arguments, which means that we can still use operators of arity $k$
to query arbitrary strings.

Now we establish a Hamming-preserving bijection between arbitrary strings of length $\ell$ and $k$-ary unbiased operators called on $x, y_1, \ldots, y_{k-1}$.
There are exactly $\ell$ groups among $1 \le j < 2^{k-1}$ for which $n_j = 1$. Based on this, we define an arbitrary injection $g: \{1, \ldots, \ell\} \to \{1, \ldots, 2^{k-1}-1\}$
such that $n_{g_i} = 1$ for all $1 \le i \le \ell$. Since $g$ is an injection, $h = g^{-1}$ exists. We additionally define $h(i) = 0$ in the case $n_i = 0$.
An arbitrary bit string $r = r_1 \ldots r_{\ell}$ of length $\ell$ is mapped to the following $k$-ary unbiased operator:
\begin{equation*}
\langle n_0, n_1, \ldots, n_i, \ldots, n_{2^{k-1}-1} \rangle \to
\langle 0, r_{h(1)}, \ldots, r_{h(i)}, \ldots, r_{h(2^{k-1}-1)} \rangle,
\end{equation*}
where we additionally define $r_0 = 0$ for convenience. On the other hand, when we are given an operator of this sort,
we can remove the identity zero positions at index 0 and at indices where $h(i) = 0$ and unambiguously restore the original string $r$
using the injection $g$ and the remaining positions. This means that this mapping is actually a bijection.
Finally, it is Hamming-preserving, since it can be achieved by applying a permutation
(which is a composition of $g$ and the permutation of the chosen $\ell$ indices which appeared as a result of querying strings $y_i$)
and then an $(x \oplus \cdot)$ operation. The algorithm to query arbitrary strings through this bijection is outlined in Algorithm~\ref{algo-query}.

\begin{algorithm}[!t]
\caption{The way to query arbitrary strings}\label{algo-query}
\begin{algorithmic}
\Function{QueryVirtual}{$x, y_1, \ldots, y_{k-1}; w$}
    \State{-- $x, y_1, \ldots, y_{k-1}$ are the virtual coordinate system}
    \State{-- $w$ is a bit string of length $\ell$}
    \State{$\textsc{Op} := \langle n_0, n_1, \ldots\rangle \to \langle 0, \Call{MakeOperator}{n_1, \ldots; w} \rangle$}
    \State{$z \gets \Call{Op}{x, y_1, \ldots, y_{k-1}}$; $\Call{Query}{z}$}
    \State{\Return $z$}
\EndFunction
\Function{MakeOperator}{$n_1, \ldots, n_z; w$}
    \State{$d \gets []$; $t \gets 1$}
    \For{$i \in [1; z]$}
        \If{$n_i = 1$}
            \State{$d \gets \Call{Append}{d, w_t}$; $t \gets t + 1$}
        \Else
            \State{$d \gets \Call{Append}{d, 0}$}
        \EndIf
    \EndFor
    \State{\Return $d$}
\EndFunction
\end{algorithmic}
\end{algorithm}

A $k$-ary unbiased algorithm cannot access or manipulate the particular bits of the bit strings from the optimization 
domain, neither can it access or set their particular values. 
However, it is free to apply arbitrary $k$-ary unbiased operators. Through the prism
of the just introduced bijection, our algorithm is also able to manipulate individual bits by deciding which exactly
$k$-ary unbiased operators to use. At the first sight, this looks like a violation of the definition of a $k$-ary
unbiased algorithm, however, all these operations are still all performed using only $k$-ary unbiased operators
without any access to representations of individuals. In particular, the following statements hold.

\begin{itemize}
    \item Our algorithm never accesses a bit at the specified index, since the identity of a particular bit is 
          determined by the $k$ bit strings created earlier by unbiased operators, so the bit in question 
          is never fixed in advance and can be any particular bit in different runs of the algorithm.
    \item Our algorithm also does not know whether the bit value is 1 or 0. In each query made through the
          bijection described above, the algorithm defines
          whether this bit (whose position, as said above, is also defined by where exactly and how exactly the $k$ 
          arguments of the corresponding $k$-ary unbiased operator differ from each other) will have the same or 
          the opposite value as in a certain existing query $x$.
\end{itemize}

\section{The Proposed Algorithm}\label{algorithms}

In this section we outline the proposed $k$-ary unbiased algorithm for solving \textsc{OneMax} in $(2 + o(1)) \cdot n / (k - 1) = O(n / k)$ queries.
We explain the general scheme first (Section~\ref{algorithm-general}), then give a number of ideas for how to further speed up the algorithm
(Section~\ref{algorithm-speedups}), and finally give somewhat faster algorithms specialized for $k=3$ and $4$ (Section~\ref{algorithm-3}).

In the text below, as well as in algorithm listings, we make a difference between the query procedure \textsc{Query} and the call to fitness function
\textsc{Fitness}. We always assume that \textsc{Query} must immediately follow the creation of the individual, which associates the fitness value with this individual.
The subsequent calls to \textsc{Fitness} return this computed value and do not count towards the number of queries.
We also assume that whenever \textsc{Query} is called on a bit string which is the optimum, the algorithm immediately terminates, so we do not have to check this condition.

\subsection{The General Form}\label{algorithm-general}

Similarly to the algorithm from~\cite{doerr2012reducing}, we maintain two bit strings, $x$ and $y$, to encode which bits are already guessed right.
The only difference is that we change the meaning to the opposite one: the bits that differ between $x$ and $y$ are guessed right in $x$.
Initially, $x$ is generated uniformly at random and $y = x$.

We optimize bits in blocks of size at most $\ell = 2^{k-1}-1$, where all blocks except for possibly the last one have the maximum possible size $\ell$.
For every block, we first determine which bits to optimize by querying $y_0 = \textsc{FlipEllSame}(x, y)$,
where $\textsc{FlipEllSame} = \langle n_0, n_1 \rangle \to \langle \min(\ell, n_0), 0 \rangle$ is a binary unbiased operator.
Then we build the virtual coordinate system as explained in Section~\ref{system} and Algorithm~\ref{algo-vcs}: 
\begin{equation*}
(y_1, \ldots, y_{k-1}) = \textsc{BuildCoordinates}(x, y_0; k).
\end{equation*}
After that we use this coordinate system to forward queries of an unrestricted algorithm solving \textsc{OneMax} of length $\ell$
to the $\ell$ bits of the original \textsc{OneMax} problem of length $n$ using only $k$-ary unbiased operators.

More formally, we generate random bit strings $w_i$, $i \ge 1$, of length $\ell$ and compute their fitness values $f_i$ as follows:
\begin{equation*}
    f_i = \textsc{Fitness}(\textsc{QueryVirtual}(x, y_1, \ldots, y_{k-1}; w_i)) - \Delta,
\end{equation*}
where $\Delta = (\textsc{Fitness}(x) + \textsc{Fitness}(y_0) - \ell) / 2$ is the term that captures how many ones
the bits unrelated to the $\ell$ chosen bits contribute to the raw fitness value.
We repeat this until, for some $i = i_0$, there remains only one string $w_{\text{opt}}$ of length $\ell$ which,
according to $(w_1, f_1), (w_2, f_2), \ldots, (w_{i_0}, f_{i_0})$, can be the optimum.
Note that this is essentially the same random sampling technique used to solve \textsc{OneMax} in an unrestricted manner,
however, this particular version does not perform restarts and terminates once the optimum is unambiguously determined.

We should explicitly note here that the random bit strings $w_i$ are not the bit strings in the search space of the 
original \textsc{OneMax} problem, but rather the descriptions of the $k$-ary unbiased operators.

Yet another concern 
is that the $w_i$ strings need to be stored somewhere between the queries, which is not explicitly allowed by a 
definition of a $k$-ary unbiased black-box algorithm. However, since any randomized algorithm can be thought of as a 
deterministic algorithm which uses a long enough precomputed sequence of random numbers as a source of its randomized
decisions, the choice of these strings is in fact deterministic, and all these strings can be recomputed from 
scratch at any convenient moment from the number of already performed queries only. It is also possible to prove a more 
general version of this idea to allow an arbitrarily large auxiliary storage, accessible in an arbitrary way, available to 
any $k$-ary unbiased algorithm, while the only information used to populate this storage is the list of fitness 
values.

When the optimum string $w_{\text{opt}}$ is found, we use the same mechanism to set the $\ell$ bits to their 
optimal values while other bits preserve their values. This will be the new value for $x$, and we update $y$ accordingly:
\begin{align*}
    x_{\text{new}} &= \textsc{QueryVirtual}(x, y_1, \ldots, y_{k-1}; w_{\text{opt}}), \\
    y_{\text{new}} &= \textsc{Xor3}(y, x, x_{\text{new}}),
\end{align*}
where $\textsc{Xor3} = \langle n_0, n_1, n_2, n_3 \rangle \to \langle 0, n_1, n_2, 0 \rangle$
is a ternary unbiased operator essentially performing an exclusive-or operation over its three arguments,
also known in the literature as \texttt{selectBits}~\cite{doerr-unbiased-jump-arxiv}.

We outline the entire algorithm as Algorithm~\ref{algo-generic}. For every block of length $\ell$,
it spends one query to determine which $\ell$ bits to optimize, $k-1$ queries to build the virtual
coordinate system, $(2+o(1)) \cdot \ell / \log_2 \ell$ queries in expectation to find the optimum and
two queries to set the next $x$ and $y$. Since $k+2 = O(\log_2\ell) = o(\ell / \log_2 \ell)$,
the entire expected number of queries made by the algorithm is $(2+o(1)) \cdot n / \log_2 \ell = 
(2+o(1)) \cdot n / (k - 1)$, where $o(1)$ is taken relative to $k$.

We note that the functions \textsc{CountConsistent} and \textsc{GetConsistent} in Algorithm~\ref{algo-generic},
operate not on the bit strings in the domain of the original problem, 
but on the bit strings of length $\ell$ which define the $k$-ary unbiased operators (which are stored in $W$)
and on their effective fitness values (which are stored in $F$). Since their first arguments are lists not of the 
original bit strings, but rather of the parameters of the $k$-ary unbiased operators, these functions are free to 
manipulate with their arguments in an arbitrary, unrestricted way.

\begin{algorithm}[!t]
\caption{The generic $k$-ary unbiased algorithm for \texorpdfstring{\textsc{OneMax}}{OneMax}}\label{algo-generic}
\begin{algorithmic}
\Procedure{SolveOneMax}{$n, k$}
    \State{$x \gets \Call{UniformRandomBitStringOfLength}{n}$}
    \State{$\Call{Query}{x}$}
    \State{$y \gets x$}\Comment{Bits different in $x$ and $y$ are guessed right}
    \State{$b \gets 0$}\Comment{The number of bits guessed right}
    \While{true}
        \If{$b = 1$}
            \State{$\textsc{FlipOneWhereSame} := \langle n_0, n_1 \rangle \to \langle 1, 0 \rangle$}
            \State{$\Call{FlipOneWhereSame}{x, y}$}\Comment{Will terminate there}
        \Else
            \State{$\ell \gets \min(b, 2^{k-1}-1)$}
            \State{$k \gets 1 + \lceil \log_2 (\ell + 1) \rceil$} \Comment{Can change at the end}
            \State{$\textsc{FlipEllSame} := \langle n_0, n_1 \rangle \to \langle \ell, 0 \rangle$}
            \State{$y_0 \gets \Call{FlipEllSame}{x, y}$}
            \State{$(y_1, \ldots, y_{k-1}) \gets \Call{BuildCoordinates}{x, y_0; k}$}
            \State{$\Delta \gets (\Call{Fitness}{x} + \Call{Fitness}{y_0} - \ell) / 2$}
            \State{$W = []; F = []$}
            \While{$\Call{CountConsistent}{W, F} > 1$}
                \State{$w \gets \Call{UniformRandomBitStringOfLength}{\ell}$}
                \State{$q \gets \Call{QueryVirtual}{x, y_1, \ldots, y_{k-1}; w}$}
                \State{$f \gets \Call{Fitness}{q} - \Delta$}
                \State{$W \gets \Call{Append}{W, w}$; $F \gets \Call{Append}{F, f}$}
            \EndWhile
            \State{$w_{\text{opt}} \gets \Call{GetConsistent}{W, F}$}
            \State{$x_{\text{new}} \gets \Call{QueryVirtual}{x, y_1, \ldots, y_{k-1}; w_{\text{opt}}}$}
            \State{$\textsc{Xor3} := \langle n_0, n_1, n_2, n_3 \rangle \to \langle 0, n_1, n_2, 0 \rangle$}
            \State{$y \gets \Call{Xor3}{y, x, x_{\text{new}}}$}
            \State{$x \gets x_{\text{new}}$; $b \gets b - \ell$}
        \EndIf
    \EndWhile
\EndProcedure
\end{algorithmic}
\end{algorithm}

\subsection{Performance Improvements}\label{algorithm-speedups}

There are two main sources for performance improvement of the general algorithm.
The first of them is especially significant when $k$ is relatively small. While this issue seems to be
not so important when talking about the unbiased black-box complexity questions ``in the large'',
we shall note, however, that the majority of evolutionary algorithms used in practice feature small arities of their
operators. For instance, differential evolution~\cite{differential-evolution} typically samples
one individual as a linear combination of three different individuals, however it does not sample this individual
directly but crosses it over with another individual and only then queries its fitness. This constitutes an
operator of arity $k = 4$. Finding efficient unbiased algorithms even with small constant arities
may still shed some light on how powerful such operators are compared to mutations and crossovers.

What we note first is that we can use the fitness values of individuals $y_1, \ldots, y_{k-1}$, which constitute
the virtual coordinate system, to seed the unrestricted algorithm for solving \textsc{OneMax}. Although these
individuals are not sampled independently, they still limit the number of consistent optima.
This effect is expected to, and indeed does, reduce the number of subsequent queries for small constant $k$.
Even one such individual reduces the number of potential optima from $2^{\ell}$ to $O(2^{\ell} / \sqrt{\ell})$,
and more individuals perform an even better reduction.
In fact, for very small $k$ even the chance of hitting the optimum inside Algorithm~\ref{algo-vcs},
or immediately after it, is quite noticeable, so in a practical implementation we should check whether
a currently sampled string corresponds to the optimum in $\ell$ bits.

Our second potential improvement comes from an unexpected side,
and its impact is experimentally measured but not yet proven for arbitrary $n$ and $k$.
This is a modification in the random sampling unrestricted algorithm to solve \textsc{OneMax},
which samples a new bit string to test not uniformly at random among all possible bit strings,
but uniformly at random among all \emph{potential optima}, that is, among strings consistent
with the previous samples and measurements.

From now on, we call the original algorithm the \emph{pure}, and this modification the \emph{hack}.
Currently we have no proof for the runtime of the \emph{hack}, hence the name.
In particular, we do not known whether it is $O(n / \log n)$ and whether it is better or worse than
the \emph{pure} algorithm. However, experimental measurements for $2 \le n \le 32$ suggest that
the \emph{hack} performs generally better than the \emph{pure} algorithm for these values
(see Figure~\ref{exp-unrestricted}). In subsequent experiments with the unbiased algorithms,
we will evaluate both versions.

\begin{figure}
\begin{tikzpicture}
\begin{axis}[
    width=\columnwidth,
    height=0.7\columnwidth,
    cycle list name=myplotcycle,
    legend pos=north west,
    grid=both
]
\addplot plot[error bars/.cd, y dir=both, y explicit] table[y error=dev] {\UnrestrictedPure};
\addlegendentry{Pure};
\addplot plot[error bars/.cd, y dir=both, y explicit] table[y error=dev] {\UnrestrictedHack};
\addlegendentry{Hack};
\end{axis}
\end{tikzpicture}
\caption{Experiment results for unrestricted algorithms.
$10^4$ independent runs were performed for each $n$.
Means and standard deviations are plotted.
}\label{exp-unrestricted}
\end{figure}

\subsection{Algorithms for $k = 3$ and 4}\label{algorithm-3}

In this section, we describe a custom implementation of the block-wise optimization of \textsc{OneMax}
using ternary unbiased operators ($k = 3$). However, since the block size $\ell = 3$,
we simply perform the fitness-dependent case analysis instead of explicit construction of the virtual coordinate
system and subsequent random sampling.

The algorithm is presented in Algorithm~\ref{algo-3}, and the operators it uses are given in Algorithm~\ref{algo-3-ops}.
By following its branches and computing the probabilities of their execution,
one can prove the expected running time of $9n / 8 \pm O(1)$. Note that this is faster than the best known binary unbiased algorithm,
whose expected running time is $2n \pm O(1)$~\cite{doerr-johannsen-faster-blackbox}. In a similar way, we designed an algorithm with $k = 4$ and $\ell = 7$,
whose expected running time is $765/896n \pm O(1)$.

\begin{algorithm}[!t]
\caption{The operators for Algorithm~\ref{algo-3}}\label{algo-3-ops}
\begin{algorithmic}
    \State{$\textsc{SameX} := \langle n_0, n_1 \rangle \to \langle x, 0 \rangle$}
    \State{$\textsc{OneWhereEqual} := \langle n_0, n_1, n_2, n_3 \rangle \to \langle 1, 0, 0, 0 \rangle$}
    \State{$\textsc{OneWhere2ndDiffers} := \langle n_0, n_1, n_2, n_3 \rangle \to \langle 0, 1, 0, 0 \rangle$}
    \State{$\textsc{OneWhere3rdDiffers} := \langle n_0, n_1, n_2, n_3 \rangle \to \langle 0, 0, 1, 0 \rangle$}
    \State{$\textsc{TwoWhere3rdDiffers} := \langle n_0, n_1, n_2, n_3 \rangle \to \langle 0, 0, 2, 0 \rangle$}
    \State{$\textsc{Complicated} := \langle n_0, n_1, n_2, n_3 \rangle \to \langle 0, n_1, 0, 1 \rangle$}
    \State{$\textsc{Xor3} := \langle n_0, n_1, n_2, n_3 \rangle \to \langle 0, n_1, n_2, 0 \rangle$}
\end{algorithmic}
\end{algorithm}

\begin{algorithm}[!t]
\caption{The custom unbiased algorithm for $k = 3$}\label{algo-3}
\begin{algorithmic}
\scriptsize
\Procedure{SolveOneMax3}{$n$}
    \State{$x \gets \Call{UniformRandomBitStringOfLength}{n}$; $y \gets x$}
    \State{$b \gets n$}\Comment{How many bits left to be optimized}
    \While{true}
        \If{$b = 1$}
            \State{$o \gets \Call{Same1}{x, y}$; $\Call{Query}{o}$} \Comment{Optimum}
        \ElsIf{$b = 2$}
            \If{$\Call{Fitness}{x} = n - 2$}
                \State{$o \gets \Call{Same2}{x, y}$; $\Call{Query}{o}$} \Comment{Optimum}
            \Else
                \State{$z \gets \Call{Same1}{x, y}$; $\Call{Query}{z}$}
                \If{$\Call{Fitness}{z} = n - 2$}
                    \State{$o \gets \Call{OneWhereEqual}{x, y, z}$; $\Call{Query}{o}$}
                \EndIf \Comment{Either $z$ or $o$ was the optimum}
            \EndIf
        \Else
            \State{$m \gets \Call{Same3}{x, y}$; $\Call{Query}{m}$}
            \If{$\Call{Fitness}{m} = \Call{Fitness}{x} + 3$}
                \State{$x \gets m$} \Comment{$y$ is already OK}
            \ElsIf{$\Call{Fitness}{m} = \Call{Fitness}{x} - 3$}
                \State{$y \gets \Call{Xor3}{x, y, m}$; $\Call{Query}{y}$} \Comment{$x$ is already OK}
            \ElsIf{$\Call{Fitness}{m} = \Call{Fitness}{x} + 1$}
                \State{$p \gets \Call{TwoWhere3rdDiffers}{x, y, m}$; $\Call{Query}{p}$}
                \If{$\Call{Fitness}{p} = \Call{Fitness}{x} + 2$}
                    \State{$x \gets p$}
                \Else
                    \State{$r \gets \Call{Complicated}{x, m, p}$; $\Call{Query}{r}$}
                    \If{$\Call{Fitness}{r} = \Call{Fitness}{x} + 2$}
                        \State{$x \gets r$}
                    \Else
                        \State{$x \gets \Call{Xor3}{x, p, r}$; $\Call{Query}{x}$}
                    \EndIf
                \EndIf
                \State{$y \gets \Call{Xor3}{x, y, m}$; $\Call{Query}{y}$}
            \Else
                \State{$p \gets \Call{OneWhere3rdDiffers}{x, y, m}$; $\Call{Query}{p}$}
                \If{$\Call{Fitness}{p} = \Call{Fitness}{x} + 1$}
                    \State{$x \gets p$}
                \Else
                    \State{$r \gets \Call{OneWhere2ndDiffers}{x, m, p}$}
                    \State{$\Call{Query}{r}$}
                    \If{$\Call{Fitness}{r} = \Call{Fitness}{x} + 1$}
                        \State{$x \gets r$}
                    \Else
                        \State{$x \gets \Call{Xor3}{m, p, r}$; $\Call{Query}{x}$}
                    \EndIf
                \EndIf
                \State{$y \gets \Call{Xor3}{x, y, m}$; $\Call{Query}{y}$}
            \EndIf
            \State{$b \gets b - 3$}
        \EndIf
    \EndWhile
\EndProcedure
\end{algorithmic}
\end{algorithm}

\section{Experiments}\label{experiments}

We have implemented a small programming platform for experimenting with unbiased algorithms of fixed arity.
In this platform, the bit strings of individuals are encapsulated in a so-called ``unbiased processor'',
and the individuals are accessed through handles, which reveal only their fitness. Unbiased operators
are naturally encoded as functions which map one integer array, $\langle n_0, n_1, \ldots \rangle$,
to another one, $\langle d_0, d_1, \ldots \rangle$. With these precautions, the platform ensures that one implements
only unbiased algorithms. The source code is available on GitHub.\footnote{\url{https://github.com/mbuzdalov/unbiased-bbc}}

\begin{figure}
\scalebox{0.75}{
\begin{tikzpicture}
\begin{axis}[
    width=1.3333\textwidth,
    height=0.53333\textheight,
    cycle list name=myplotcycle,
    legend pos=north west,
    grid=both
]
\addplot plot[error bars/.cd, y dir=both, y explicit] table[y error=dev] {\ArityTwo};
\addlegendentry{standard, $k=2$};
\addplot plot[error bars/.cd, y dir=both, y explicit] table[y error=dev] {\GenericArityThreePure};
\addlegendentry{generic, $k=3$, pure};
\addplot plot[error bars/.cd, y dir=both, y explicit] table[y error=dev] {\GenericArityThreeHack};
\addlegendentry{generic, $k=3$, hack};
\addplot plot[error bars/.cd, y dir=both, y explicit] table[y error=dev] {\CustomArityThree};
\addlegendentry{custom, $k=3$};
\addplot plot[error bars/.cd, y dir=both, y explicit] table[y error=dev] {\GenericArityFourPure};
\addlegendentry{generic, $k=4$, pure};
\addplot plot[error bars/.cd, y dir=both, y explicit] table[y error=dev] {\GenericArityFourHack};
\addlegendentry{generic, $k=4$, hack};
\addplot plot[error bars/.cd, y dir=both, y explicit] table[y error=dev] {\CustomArityFour};
\addlegendentry{custom, $k=4$};
\addplot plot[error bars/.cd, y dir=both, y explicit] table[y error=dev] {\GenericArityFivePure};
\addlegendentry{generic, $k=5$, pure};
\addplot plot[error bars/.cd, y dir=both, y explicit] table[y error=dev] {\GenericArityFiveHack};
\addlegendentry{generic, $k=5$, hack};
\addplot plot[error bars/.cd, y dir=both, y explicit] table[y error=dev] {\GenericAritySixPure};
\addlegendentry{generic, $k=6$, pure};
\addplot plot[error bars/.cd, y dir=both, y explicit] table[y error=dev] {\GenericAritySixHack};
\addlegendentry{generic, $k=6$, hack};
\end{axis}
\end{tikzpicture}}
\caption{Experiment results for $k$-ary unbiased algorithms on \textsc{OneMax}. 100 independent runs were made for each point.}\label{exp-unbiased}
\end{figure}

By the means of this platform, we have implemented the well-known binary algorithm, Algorithm~\ref{algo-3} for $k=3$ and 4,
as well as the \emph{pure} and \emph{hack} flavors of the generic algorithm for arbitrary fixed arities.
We could handle only arities $3 \le k \le 6$, since with $k = 7$ the block size becomes $\ell = 63$,
the size which currently cannot be solved by neither \emph{pure} nor \emph{hack} random sampling in reasonable time.
The experimental results are presented in Figure~\ref{exp-unbiased}.

For $k = 3$, both generic algorithms showed the performance of $1.29n$, which is somewhat greater than $9n/8 = 1.125n$
by the custom algorithm. However, for $k=4$ the runtimes of $\approx 0.958n$ was shown by the pure algorithm,
and $\approx 0.934n$ by the hack, both smaller than $n$.
The custom algorithm for $k=4$ is still slightly faster with $765/896n \approx 0.854n$.
For $k = 5$ the pure and hack versions were
$\approx 0.694n$ and $\approx 0.653n$ correspondingly. Finally, for $k = 6$ the results were close to half the size,
with the pure algorithm being slightly above, $\approx 0.505n$, while the hack is slightly below, $\approx 0.476n$.

\section{Conclusion}\label{conclusion}

We presented a cleaner and more efficient way to prove that the $k$-ary unbiased black-box complexity of \textsc{OneMax}
is $O(n / k)$. Our approach enabled to define the explicit constants: the running time is shown to be $(2 + o(1)) \cdot n / (k - 1)$,
where $o(1)$ relates to $k$. We also showed that unbiased operators are powerful enough
to enable solving exponentially large parts of the original problem by arbitrary unrestricted algorithms.
Finally, this approach is efficient enough even for the smallest arities $k \ge 3$.
We hope that this paper will accelerate and simplify the research on higher-arity operators and their impact
on the performance of randomized search heuristics.

\section{Acknowledgments}

This work was supported by Russian Science Foundation under the agreement No.~17-71-20178.

\bibliographystyle{abbrv}
\bibliography{../../../../bibliography}

\begin{thebibliography}{10}

\bibitem{buzdalovD-gecco17-3cnf}
M.~Buzdalov and B.~Doerr.
\newblock Runtime analysis of the $(1 + (\lambda, \lambda))$ genetic algorithm
  on random satisfiable {3-CNF} formulas.
\newblock In {\em Proceedings of Genetic and Evolutionary Computation
  Conference}, pages 1343--1350, 2017.

\bibitem{doerr-doerr-lambda-lambda-self-adjustment}
B.~Doerr and C.~Doerr.
\newblock Optimal parameter choices through self-adjustment: Applying the
  1/5-th rule in discrete settings.
\newblock In {\em Proceedings of Genetic and Evolutionary Computation
  Conference}, pages 1335--1342, 2015.

\bibitem{learning-from-black-box-thcs}
B.~Doerr, C.~Doerr, and F.~Ebel.
\newblock From black-box complexity to designing new genetic algorithms.
\newblock {\em Theoretical Computer Science}, 567:87--104, 2015.

\bibitem{doerr-unbiased-jump-arxiv}
B.~Doerr, C.~Doerr, and T.~K\"otzing.
\newblock Unbiased black-box complexities of jump functions---how to cross
  large plateaus, 2014.

\bibitem{unbiased-partition-is-polynomial}
B.~Doerr, C.~Doerr, and T.~K{\"o}tzing.
\newblock The unbiased black-box complexity of {P}artition is polynomial.
\newblock {\em Artificial Intelligence}, 216:275--286, 2014.

\bibitem{doerr-doerr-yang-optimal-parameter-choices-gecco}
B.~Doerr, C.~Doerr, and J.~Yang.
\newblock Optimal parameter choices via precise black-box analysis.
\newblock In {\em Proceedings of Genetic and Evolutionary Computation
  Conference}, pages 1123--1130, 2016.

\bibitem{doerr-johannsen-faster-blackbox}
B.~Doerr, D.~Johannsen, T.~K\"otzing, P.~K. Lehre, M.~Wagner, and C.~Winzen.
\newblock Faster black-box algorithms through higher arity operators.
\newblock In {\em Proceedings of Foundations of Genetic Algorithms}, pages
  163--172, 2011.

\bibitem{too-fast-unbiased-bbc}
B.~Doerr, T.~K{\"o}tzing, and C.~Winzen.
\newblock Too fast unbiased black-box algorithms.
\newblock In {\em Proceedings of Genetic and Evolutionary Computation
  Conference}, pages 2043--2050. ACM, 2011.

\bibitem{doerr2012reducing}
B.~Doerr and C.~Winzen.
\newblock Reducing the arity in unbiased black-box complexity.
\newblock In {\em Proceedings Genetic and Evolutionary Computation}, pages
  1309--1316, 2012.

\bibitem{mastermind-constant-memory}
B.~Doerr and C.~Winzen.
\newblock Playing {M}astermind with constant-size memory.
\newblock {\em Theory of Computing Systems}, 55(4):658--684, 2014.

\bibitem{ranking-based-complexity}
B.~Doerr and C.~Winzen.
\newblock Ranking-based black-box complexity.
\newblock {\em Algorithmica}, 68(3):571--609, 2014.

\bibitem{elitist-complexity}
C.~Doerr and J.~Lengler.
\newblock Elitist black-box models: Analyzing the impact of elitist selection
  on the performance of evolutionary algorithms.
\newblock In {\em Proceedings of Genetic and Evolutionary Computation
  Conference}, pages 839--846. ACM, 2015.

\bibitem{droste-jansen-wegener}
S.~Droste, T.~Jansen, and I.~Wegener.
\newblock Upper and lower bounds for randomized search heuristics in black-box
  optimization.
\newblock {\em Theory of Computing Systems}, 39(4):525--544, 2006.

\bibitem{erdos-renyi-two-problems}
P.~Erd\H{o}s and A.~R\'enyi.
\newblock On two problems of information theory.
\newblock {\em Magyar Tudom\'anyos Akad\'emia Matematikai Kutat\'o Int\'ezet
  K\"ozlem\'enyei}, 8:229--243, 1963.

\bibitem{jansen-crossover}
T.~Jansen and I.~Wegener.
\newblock The analysis of evolutionary algorithms --~a~proof that crossover
  really can help.
\newblock {\em Algorithmica}, 34:47--66, 2002.

\bibitem{unbiased-bbc}
P.~K. Lehre and C.~Witt.
\newblock Black-box search by unbiased variation.
\newblock In {\em Proceedings of Genetic and Evolutionary Computation
  Conference}, pages 1441--1448. ACM, 2010.

\bibitem{unbiased-bbc-algorithmica}
P.~K. Lehre and C.~Witt.
\newblock Black-box search by unbiased variation.
\newblock {\em Algorithmica}, 64:623--642, 2012.

\bibitem{generic-unbiased-algorithms}
J.~Rowe and M.~Vose.
\newblock Unbiased black box search algorithms.
\newblock In {\em Proceedings of Genetic and Evolutionary Computation
  Conference}, pages 2035--2042. ACM, 2011.

\bibitem{differential-evolution}
R.~Storn and K.~Price.
\newblock Differential evolution -- a simple and efficient heuristic for global
  optimization over continuous spaces.
\newblock {\em Journal of Global Optimization}, 11(4):341--359, 1997.

\bibitem{sudholt-crossover-speeds-up}
D.~Sudholt.
\newblock Crossover speeds up building-block assembly.
\newblock In {\em Proceedings of Genetic and Evolutionary Computation
  Conference}, pages 689--696, 2012.

\bibitem{sudholt-crossover-speeds-up-evco}
D.~Sudholt.
\newblock How crossover speeds up building block assembly in genetic
  algorithms.
\newblock {\em Evolutionary Computation}, (2):237--274, 2017.

\end{thebibliography}

\end{document}